\titlespacing{\section}{0pt}{0.4ex}{0.4ex}
\titlespacing{\subsection}{0pt}{0.2ex}{0.2ex}
\titlespacing{\subsubsection}{0pt}{0.1ex}{0.1ex}
\newtheorem{definition}{Definition}
\newtheorem{theorem}{Theorem}
\newtheorem{remark}{Remark}
\newtheorem{lemma}{Lemma} 
\newtheorem{assumption}{Assumption}
\DeclareMathAlphabet{\mathcal}{OMS}{cmsy}{m}{n}
\newcommand{\regtext}[1]{\mathrm{\textnormal{#1}}}
\newcommand{\R}{{\mathbb{R}}}
\newcommand{\N}{{\mathbb{N}}}
\DeclarePairedDelimiter{\norm}{\lVert}{\rVert}
\newcommand{\opt}{^\star}
\newcommand{\vc}[1]{{\mathbf{#1}}}
\newcommand{\zeros}{\vc{0}}
\newcommand{\zono}[1]{\mathcal{Z}\!\left(#1\right)}
\newcommand{\ctr}{\vc{c}}
\newcommand{\Gen}{\vc{G}}
\newcommand{\idx}[1]{^{(#1)}}
\newcommand{\arridx}[2]{{\left(#1\right)}_{#2}}
\newcommand{\idxi}{\idx{i}}
\newcommand{\lbl}[1]{_{\regtext{#1}}}
\newcommand{\obs}{\lbl{obs}}
\newcommand{\goal}{\lbl{goal}}
\newcommand{\total}{\lbl{total}}
\newcommand{\brk}{\lbl{brk}}
\newcommand{\tbold}[1]{#1}
\newcommand{\nrl}{{n\lbl{RL}}}
\newcommand{\niter}{{n\lbl{iter}}}
\newcommand{\nplan}{{n\lbl{plan}}}
\newcommand{\ntraj}{{q}}
\newcommand{\vcstate}{x}
\newcommand{\vcoutput}{y}
\newcommand{\action}{u}
\newcommand{\plan}{\vc{p}}
\newcommand{\noise}{w}
\newcommand{\measnoise}{v}
\newcommand{\statespace}{Y}
\newcommand{\actionspace}{\mathcal{Z}_{u,t}}
\newcommand{\noisespace}{\mathcal{Z}_w}
\newcommand{\measnoisespace}{\mathcal{Z}_v}
\newcommand{\statedata}{Y}
\newcommand{\actiondata}{U_-}
\newcommand{\RLoutput}{\hat{\vcoutput}}
\newcommand{\RLaction}{\action}
\newcommand{\policy}{\pi}
\newcommand{\rewfunc}{\rho}
\newcommand{\rew}{r}
\newcommand{\totaltime}{{n\lbl{total}}}
\newcommand{\reachset}{R}
\newcommand{\reachapprox}{{\hat{\reachset}}}
\newcommand{\enforcefunc}[1]{\regtext{enforce\_safety}\!\left(#1\right)}
\begin{document}

% \twocolumn[
% \icmltitle{
%     Safe Reinforcement Learning for Mobile Robot Navigation using Black-Box Reachability Analysis}
    
\title{Safe Reinforcement Learning using Data-Driven Predictive Control}

\author{Mahmoud Selim$^{1}$, Amr Alanwar$^{2}$, M. Watheq El-Kharashi$^{1}$, Hazem M. Abbas$^{1}$, Karl H. Johansson$^{3}$ \vspace{-5mm}
%\thanks{This research was supported by the Swedish Research Council, and the Knut and Alice Wallenberg Foundation.}
\thanks{$^{1}$Ain Shams University, Cairo, Egypt.
$^{2}$Jacobs University, Bremen, Germany. 
$^{3}$KTH Royal Institute of Technology, Stockholm, Sweden.
Corresponding author: \texttt{mahmoud.selim@eng.asu.edu.eg}.}
}

%\markboth{IEEE Robotics and Automation Letters. Preprint Version. Accepted June, 2022}
%{Selim \MakeLowercase{\textit{et al.}}: BLACK-BOX REACHABILITY-BASED SAFETY} 

\maketitle

\begin{abstract}
Reinforcement learning (RL) algorithms can achieve state-of-the-art performance in decision-making and continuous control tasks. However, applying RL algorithms on safety-critical systems still needs to be well justified due to the exploration nature of many RL algorithms, especially when the model of the robot and the environment are unknown. To address this challenge, we propose a data-driven safety layer that acts as a filter for unsafe actions. The safety layer uses a data-driven predictive controller to enforce safety guarantees for RL policies during training and after deployment. The RL agent proposes an action that is verified by computing the data-driven reachability analysis. If there is an intersection between the reachable set of the robot using the proposed action, we call the data-driven predictive controller to find the closest safe action to the proposed unsafe action. The safety layer penalizes the RL agent if the proposed action is unsafe and replaces it with the closest safe one. In the simulation, we show that our method outperforms state-of-the-art safe RL methods on the robotics navigation problem for a Turtlebot 3 in Gazebo and a quadrotor in Unreal Engine 4 (UE4). 
\end{abstract}
\begin{IEEEkeywords}
Reinforcement learning, robot safety, task and motion planning.
\end{IEEEkeywords}
%\vspace{}
\section{Introduction}

Safety of Cyber-Physical Systems (CPSs) has always been of great interest to researchers \cite{6061910}. This imposes a great challenge for reinforcement learning controllers due to the exploration nature of many reinforcement learning algorithms \cite{book:sutton1998introduction}. This is especially true when the environment and/or system models are both time-varying or subject to noise \cite{mihatsch2002risk}. Recently, many techniques have been proposed to mitigate the safety problems of RL algorithms. An RL agent is considered to be safe if it meets an ergodicity requirement that it can reach each state it visits from any other state it visits, allowing for reversible errors \cite{safety_in_mdps, garcia2015comprehensive}. To justify the widespread deployment of RL controllers, we need to ensure the safety of RL algorithms during training and after deployment \cite{ai_safety}. In this work, we propose a safety layer to address the safety challenges of RL algorithms in a totally data-driven way when the robot model is unknown. We utilize data-driven predictive control \cite{datadriven_predictive_control} to impose strict safety constraints on RL agents. This is considered a step towards making data-driven methods more convenient and practical for safety-critical systems.

%In reinforcement learning (RL) \cite{book:sutton1998introduction}, we seek to find an effective policies for sequential decision making tasks through trial and error \cite{rl_trial_error}. However, in domains such as robotics, healthcare, and autonomous driving, certain kinds of errors pose danger to people and/or objects in the environment. One key challenge to the widespread deployment of RL in safety-critical systems is ensuring that an RL agent's policies are safe, especially when the system environment or dynamics are unknown and subject to noise \cite{mihatsch2002risk,garcia2015comprehensive}. Hence there is an emphasis on ensuring the safety of the policy during learning and execution times in order to achieve safe exploration and exploitation \cite{ai_safety}. In this work, we seek to address safety problems in RL for unknown (Black-Box) systems. We leverage data-driven reachability analysis \cite{alanwar2020data_conf,alanwar2020data}, and data-driven control \cite{datadriven_predictive_control} to make a safety filter that imposes safety constraints on the RL framework. 

%\input{Sections/3_related}
\subsection{Related Work} \label{sec:related}

Safe RL aims to maximize an objective function while respecting safety constraints. Many methods have been previously proposed to address the problem of safety for RL agents. One popular approach imposes constraints on the expected return \cite{achiam2017constrained, reward_constr_policy}. Other include risk measures \cite{risk_beyond, risk_consideration,Geibel_2005,Shen_2014} %
or impose constraints on the Markov Decision Process (MDP) \cite{cmdp_constraints} or avoid regions in the MDP where the safety constraints may be violated \cite{turchetta2016safe, eysenbach2017leave}. We note that these approaches are different from just penalizing the agent with large negative rewards. This type of penalty can lead to undesired behavior from the agent and may cause the training to be unstable. 

Another major approach to safe RL modifies the exploration process to filter out unsafe actions taken by the RL agent. This can be achieved by two methods. The first one uses a risk-directed exploration, and the second depends on the use of external knowledge \cite{garcia2015comprehensive}. Risk-directed exploration usually uses a risk-measure to determine the probability of selecting different actions during the exploration process \cite{garcia2015comprehensive, smart_exploration_in_rl}. %, risk_directed_exploration_in_rl 
External knowledge, on the other hand, can be done by providing a set of demonstrations \cite{abbeel2005exploration}, or the use of control theoretic approaches. Control theoretic approaches usually require a partial or a complete system model to predict whether the system will reach an unsafe state and prevent it. For example, one can utilize control barrier functions and control Lyapunov functions to guarantee the system safety or stability \cite{molnar2021model, liu2022robot, robustCBFarticle, cbfforunmodeleddynamics}. Others use reachability-based methods to calculate the reachable sets of the system. Safety is then achieved when the reachable set of the system lies within the safe region. However, if they don't, action correction techniques can be used, such as sampling to find a safe action or projecting the RL action into the safe set. This can be done efficiently in case of parameterized reachability as in \cite{conf:NiklasRL}.

One final remark, our proposed method is closely related to \cite{dalal2018safe,9833266}. They both use data collected offline to enforce safety constraints on the system. However, the former uses a first-order approximation shown to be quite conservative. The latter gives no regard to how much the adjusted action is close to the RL agents' action of choice. The proposed approach, however, tries to be as little invasive as possible to the RL agents' action of choice, similar to \cite{conf:melanieRL}.

Trajectory or path planning is considered a fundamental problem in robotics. It aims to find the shortest and most obstacle-free path from the start to the goal state. The path can be a set of states (position and orientation) or waypoints. Many methods have been proposed to address the planning problems in robotics. They are mainly classified into two categories: classical approaches and learning-based approaches. Classical approaches can be further divided into grid-based search algorithms and sampling-based search algorithms. Grid-based search algorithms usually find a path using a minimum travel cost like Dijkstra’s algorithm and A$^*$. Grid-based algorithms are guaranteed to find the shortest path given enough computation time and resources. However, in most real-life scenarios, finding a reasonable path or a trajectory in an acceptable computation time is preferable. Sampling-based algorithms take this approach. That is, they sample the configuration space to reach a reasonable solution. Sampling algorithms such as RRT$^*$ \cite[Ch. 5]{lavalle2006planning} are probabilistically complete and are suitable for both low and high-dimensional search spaces. Recent approaches to path planning rely heavily on learning-based approaches. Many Deep Reinforcement Learning (DRL) techniques have been developed to address the planning problem, especially in uncertain environments. Some of these approaches depend on model-based reinforcement learning for planning \cite{mbrl_planning}. Others utilize hierarchical reinforcement learning for planning tasks \cite{hrl_planning, hrl_planning2}. A clear advantage of these methods is that, given enough training samples, they can be optimized while remaining computationally efficient.

%\textbf{Proposed Method.}
% We propose a , which guarantees safety during exploration and exploitation in RL.

%To impose safety constraints on the RL agent, we propose a safety filter that utilizes data-driven predictive control \cite{datadriven_predictive_control}. The idea is that the proposed safety filter checks whether the action chosen by the RL agent is safe or not. If it's a safe action, it applies it to the environment. On the other hand, if the action chosen by the RL agent is not safe, the safety filter adjust the unsafe action to find a new safe action that is as close as possible (in the euclidean space) to the RL agent action. The proposed methods leverages data-driven methods \cite{alanwar2020data_conf} to enforce safety without requiring any knowledge of the system or the environment apriori. We demonstrate the proposed approach on the path planning problem in robotics. Specifically, we demonstrate it on a turtlebot and a quadrotor motion planning.

\textbf{Limitations.}
We assume a discrete-time setting and model the system as a linear model. We leave the continuous-time setting in reachability analysis \cite{conf:modelBasedReachRL}), non-linear system models, and the amount of data required to guarantee safety for future work.
%We focus on a discrete-time setting, assume our robot can brake to a stop, and assume accurate perception of the robot's surroundings.
%We leave continuous-time (which can be addressed with similar reachability methods to ours \cite{conf:modelBasedReachRL,althoffphdthesis}) and perception uncertainty to future work. 

% In summary, BRSL makes the following contributions:
\textbf{Contributions.}
The main contributions of this paper can be summarized as follows:
\begin{enumerate}[leftmargin=15pt,noitemsep,partopsep=0pt,topsep=0pt,parsep=0pt]
\itemsep0em
    
    \item We propose a safety filter using data-driven predictive control to enforce safety constraints while being as little invasive as possible to the RL agents' choice of action when the model of the robot is unknown.
    
    \item We demonstrate the proposed method for the motion planning problem. We show through experiments that the proposed method is, indeed, able to guarantee safety. We also compare the proposed method against a baseline RL agent, Safe Exploration in Continuous Action Spaces (SECAS) \cite{dalal2018safe}, and Safe Advantage-based Intervention for Learning policies with Reinforcement (SAILR) \cite{wagener2021safe}.

\end{enumerate}

Next, in Section \ref{sec:pb}, we provide the problem statement and formulate the safe RL problem. 
Section \ref{sec:solu} discusses the proposed method, and Section \ref{sec:eval} goes through the evaluation and discusses the result.
Finally, Section \ref{sec:con} presents concluding remarks and discusses future work.

% \tb{The code to recreate our findings is publicly available.}

% \footnotetext{Our code is available \href{https://github.com/Mahmoud-Selim/Safe-Reinforcement-Learning-for-Black-Box-Systems-Using-Reachability-Analysis}{online (click for link)} %\href{https://github.com/Mahmoud-Selim/Safe-Reinforcement-Learning-for-Black-Box-Systems-Using-Reachability-Analysis}{https://github.com/Mahmoud-Selim/Safe-Reinforcement-Learning-for-Black-Box-Systems-Using-Reachability-Analysis}
% }

%\input{Sections/3-related}

\section{Preliminaries and Problem Formulation} \label{sec:pb}

We now define the notation used throughout the paper, set representations for our reachability analysis, the system and its reachable sets, and our assumptions about data and noise.

\subsection{Notation and Set Representations}\label{subsec:notation}

Natural numbers are denoted by $\N$, and $n$-dimensional real numbers by $\R^n$. %, and the integers from $n$ to $m$ by $n{:}m$.
The element at row $i$ and column $j$ of a matrix $A$ is denoted by 
$(A)_{i,j}$, the column $j$ of $A$ by $\arridx{A}{:,j}$. 
%A matrix of ones of dimensions $n\times m$ is denoted by $\ones_{n\times m}$. 
The pseudoinverse of a matrix $A$ is denoted by $A^\dagger$.  %The $\diag{\cdot}$ operator places its arguments block-diagonally in a matrix of zeros. 
% The Kronecker product is denoted $\otimes$.
For simplicity, the Minkowski sum, defined by $\mathcal{Z}_1 \oplus \mathcal{Z}_2  
    = \{ z_1 + z_2 \,|\, z_1 \in \mathcal{Z}_1, z_2 \in \mathcal{Z}_2 \}$, is denoted by $+$ instead of $\oplus$ as the type can be determined from the context. Similarly, we use  $\mathcal{Z}_1 - \mathcal{Z}_2$ to denote $\mathcal{Z}_1 + -1 \mathcal{Z}_2$. The Cartesian product is given by $A \times B = \{\begin{bmatrix}a^\top & b^\top\end{bmatrix}^\top \ |\ a \in A, b \in B\}$.

Reachable sets are represented using zonotopes.
\begin{definition}(\textbf{Zonotope} \cite{conf:zono1998}) \label{def:zonotopes} 
Given a center $c_\mathcal{Z} \in \mathbb{R}^n$ and $\gamma_\mathcal{Z} \in \mathbb{N}$ generator vectors in a generator matrix $G_\mathcal{Z}=[g_\mathcal{Z}^{(1)},\dots,g_\mathcal{Z}^{(\gamma_\mathcal{Z})}] \in \mathbb{R}^{n \times \gamma_\mathcal{Z}}$, a zonotope is the set
%\citep[Def.~1]{Girard2005} 
\begin{equation*}
	\mathcal{Z} = \Big\{ x \in \mathbb{R}^n \; \Big| \; x = c_\mathcal{Z} + \sum_{i=1}^{\gamma_\mathcal{Z}} \beta^{(i)} \, g_\mathcal{Z}^{(i)} \, ,
	-1 \leq \beta^{(i)} \leq 1 \Big\} \; ,
\end{equation*}
where $\beta^{(i)}$ is called the zonotope's factor. We use the shorthand notation $\mathcal{Z} = \zono{c_\mathcal{Z},G_\mathcal{Z}}$ for a zonotope. %\hfill $\square$
\end{definition}

Given a linear map $L$, one can show that $L \mathcal{Z}  = \zono{L c_\mathcal{Z}, L G_\mathcal{Z}}$.
Given two zonotopes $\mathcal{Z}_1=\langle c_{\mathcal{Z}_1},G_{\mathcal{Z}_1} \rangle$ and $\mathcal{Z}_{\mathcal{Z}_2}=\langle c_{\mathcal{Z}_2},G_{\mathcal{Z}_2} \rangle$, their Minkowski sum is computed by %can be computed exactly \citep{conf:zono1998}:
\begin{equation}
     \mathcal{Z}_1 + \mathcal{Z}_2 = \Big\langle c_{\mathcal{Z}_1} + c_{\mathcal{Z}_2}, [G_{\mathcal{Z}_1}, G_{\mathcal{Z}_2}]\Big\rangle.
     \label{eq:minkowski}
\end{equation}
We compute the Cartesian product of two zonotopes $\mathcal{Z}_1 $ and $\mathcal{Z}_2$ by 
\begin{align}\label{eq:cart}
\mathcal{Z}_1 \times \mathcal{Z}_2 %{=} \bigg\{ \begin{bmatrix}z_1 \\ z_2\end{bmatrix} \bigg| z_1 \in \mathcal{Z}_1, z_2 \in \mathcal{Z}_2 \bigg\} 
= {\Big\langle} \begin{bmatrix} c_{\mathcal{Z}_1} \\ c_{\mathcal{Z}_2} \end{bmatrix}{,} \begin{bmatrix} G_{\mathcal{Z}_1} & 0 \\ 0 & G_{\mathcal{Z}_2} \end{bmatrix}\!\! \Big\rangle.
\end{align}

We also make use of matrix zonotopes to represent families of matrices:

\begin{definition}\label{def:matzonotopes}(\textbf{Matrix zonotope} \cite[p. 52]{conf:thesisalthoff})  
Given a center matrix $C_{\mathcal{M}} \in \mathbb{R}^{n\times T}$ and $\gamma_\mathcal{M} \in \mathbb{N}$ generator matrices $\tilde{G}_{\mathcal{M}}=[G_{\mathcal{M}}^{(1)},\dots,G_{\mathcal{M}}^{(\gamma_\mathcal{M})}] \in \mathbb{R}^{n \times (T  \gamma_\mathcal{M})}$, a matrix zonotope is defined as
%\citep[Def.~1]{Girard2005} 
\begin{equation}
	\mathcal{M} = \Big\{ X \in \mathbb{R}^{n\times T} \; \Big| \; X {=} C_{\mathcal{M}} + \sum_{i=1}^{\gamma_\mathcal{M}} \beta^{(i)} \, G_{\mathcal{M}}^{(i)} \, ,
	-1 \leq \beta^{(i)} \leq 1 \Big\} \; .
\end{equation}
%We use the shorthand notation $\mathcal{M} = \zono{C_{\mathcal{M}},\tilde{G}_{\mathcal{M}}}$. %\hfill $\square$
\end{definition}

\begin{definition}\label{def:intmatrix}(\textbf{Interval Matrix} \cite[p.42]{conf:thesisalthoff})  
An interval matrix $\mathcal{I}$ specifies the interval of all possible values for each matrix element between the left limit $\underline{I}$ and right limit $\overline{I}$:

\begin{equation}
	\mathcal{I} = [\underline{I}, \overline{I}], \qquad \underline{I},\overline{I} \in \R^{r\times c}
\end{equation}

\end{definition}

\subsection{System Dynamics and Reachable Sets}
%\textbf{Robot and Environment.}
We assume a discrete time linear system model:
\begin{align}\label{eq:sys}
    \begin{split}
        \vcstate(t+1) &=  \textit{A} \vcstate(t) + \textit{B} \action(t) + \noise(t). \\ 
        \vcoutput(t) &= \textit{C} \vcstate(t) + \measnoise(t).
    \end{split}
\end{align}
with the system matrices $A \in \mathbb{R}^{n \times n}$ and $B \in \mathbb{R}^{n \times m}$, $C \in \mathbb{R}^{p \times n}$, %$D \in \mathbb{R}^{p \times m}$, and 
%]
state $x(t) \in \mathbb{R}^{n}$, input $u(t) \in \mathbb{R}^{m}$, process noise $w(t)\in \mathbb{R}^{n}$, and measurement noise $v(t)\in \mathbb{R}^{n}$. We assume that the states of the system are measurable and compact, i.e., the system output matrix is given by $C=I$, and thus the measured output is $y(t) \in \mathbb{R}^n$. Given that $C=I$, we call the $y(t)$ the state which is corrupted by extra noise $v(t)$.
%We consider input and output constraints on the system~\eqref{eq:sys}, given by the zonotopes $\mathcal{U}_t \subset \mathbb{R}^{m}$ and $\mathcal{Y}_t \subset \mathbb{R}^{n}$. 
The input and output constraints are given by 
\begin{equation}
    \begin{split}
    u(t) \in \mathcal{U}_t \subset \mathbb{R}^{m}, \\
    y(t) \in \mathcal{Y}_t \subset \mathbb{R}^n. 
    \end{split}
    \label{eq:sys_con}
\end{equation}

Reachability analysis computes the reachable set of a system, which is the set of states $y(t)$ which can be reached given a set of uncertain initial states $\mathcal{Y}_0 \subset \mathbb{R}^n$ and a set of possible inputs $\actionspace$ at each time step $t \in \N$.
More formally, we define the reachable set at time $N \in \N$ as follows:
\begin{definition} 
The reachable set $\reachset_{t}$ after $N$ time steps, subject to a sequence of inputs $\action(t) \in \actionspace$, noise $\noise(t) \in \noisespace$, measurement noise $\measnoise(t) \in \measnoisespace$, $\forall\ t \in \{ 0, \dots, N-1\}$, and initial set $\statespace_{0} \in  \mathbb{R}^n$, is the set
\begin{align}
        \reachset_{N} = \big\{& y(N) \in \mathbb{R}^n  \big|
            \vcstate(t+1) {=} A \vcstate(t)+ B\action(t) + \noise(t),\nonumber\\ 
            &y(t) = x(t) +v(t), \vcstate_{0} \in \statespace_{0}, \action(t) \in \actionspace,v(t) \in \mathcal{Z}_v, \nonumber\\&  \regtext{and}\ \noise(t) \in \noisespace, \forall\ t = 0,\cdots,N-1\big\}.\label{eq:reachable_set_k}
\end{align}
\end{definition}

In this work, we assume unknown system dynamics. However, we still seek to over-approximate the reachable sets of the system using data-driven reachability \cite{alanwar2020data_conf}. This way, we show that we are able to enforce safety constraints on a black-box system leveraging only data-driven methods.

To enable safety guarantees, we leverage the notion of failsafe maneuvers from mobile robotics \cite{kousik2020bridging}. %,magdici2016fail
Note that many real robots have a braking safety controller available, similar to the notion of an invariant set \cite{LewEtAl2021,conf:modelBasedReachRL}.  %
Also, failsafe maneuvers exist even when a robot cannot remain stationary, such as loiter circles for aircraft \cite{fridovich2019safely}. %,althoff2015online % as follows:
%\begin{assumption}\label{assumption:stop}
%We assume the dynamics $\dyn$ are invariant to translation in position, and the robot can brake to a stop in $\nbrk \in \N$ time steps and stay stopped indefinitely.
%That is, there exists $\action\brk \in \actionspace$ such that, if the robot is stopped at state $\vcstate_k$, and if $\vcstate_{k+1} = \dyn(\vcstate_k,\action\brk)$, then $\vcstate_{k+1} = \vcstate_k$.
%\end{assumption}
% Since the dynamics $\dyn$ are Lipschitz, and $\statespace$ is compact (velocity is bounded), there exists a finite \emph{braking distance}, denoted $d\brk > 0$.
%We investigate the impact of this assumption on our method's performance with a hexacopter in wind numerical example in Section \ref{sec:eval}.

We do not assume a distribution for the noise. However, we require that the noise obeys the following assumptions for numerical tractability and robustness guarantees.

\begin{assumption}\label{assumption:noise_zonotope}
We assume that $\noise(t)$ is bounded by 
 \emph{noise zonotope} $\noisespace = \zono{\ctr_\noise,\Gen_\noise}$. % (which is not time-varying).
\end{assumption}

\begin{assumption}\label{assumption:meas_noise_zonotope}
We assume that $\measnoise(t)$ is bounded by a zonotope $\measnoisespace = \zono{\ctr_\measnoise,\Gen_\measnoise}$. Furthermore, we assume that the one-step propagation $Av(t)$ is bounded by a zonotope $Av(t) \in \mathcal{Z}_{Av}$ for all time steps. 
\end{assumption}

%We also note, in the case of Gaussian or unbounded noise, one can overapproximate a confidence level set of a probability distribution using a zonotope \cite{althoffphdthesis,shetty2020predicting}.
% Indeed, safe navigation with measurement noise or inexact state information is an active area of research that we leave to future work.

%Although \cite{datadriven_predictive_control} handles measurement noise, we leave it along with and perception uncertainty to future work. 

In this work, unsafe regions of state space, or \emph{obstacles} are denoted by as $\statespace\obs \subset \R^n$, and since the focus of this work is not motion forecasting of other actors in the environment, obstacles are assumed to be static and not to move (yet they change from one episode to another). We emphasize, however, that this work can be extended to handle dynamic environments by using frameworks that use the reachability of other agents \cite{leung2020infusing,vaskov2019not}.

\subsection{Safe RL Problem Formulation}

We formulate the safe RL problem as follows.
Let $\statespace\goal \subset \R^n$ denote a goal the agent aims to reach while avoiding the obstacles (unsafe sets) $\statespace\obs \subset \R^n$. 
Let $\totaltime \in \N$ denote a total number of time steps in the training procedure, and $\niter$ denote the number of steps per episode.
At time $t$, let $\RLoutput(t) \in \R^\nrl$ denote the state of the RL agent (which contains the state $y(t)$ of the robot, plus additional information such as sensor measurements and previous actions).
Let $\RLaction(t)$ denote the action chosen by the RL agent at time $t$.
Let
\begin{align}\label{eq:reward_defn}
    \rew: (\RLoutput(t),\statespace\goal,\RLaction(t)) \mapsto \R 
\end{align}
denote a real-valued reward function.
We seek to learn a policy
\begin{align}
    \policy\opt: \RLoutput(t) \mapsto \action(t)
\end{align}
which maximizes the cumulative reward.
Furthermore, we require that, if we roll out the trajectory given the policy $\pi$ from any initial condition in the set $\statespace_0$, then the trajectory is safe.
%space 

\setlength{\textfloatsep}{0.45cm}
\setlength{\floatsep}{0.45cm}
\begin{algorithm}[t]
\caption{Safe Reinforcement Learning}
\label{alg:rlwithddps}
% \begin{algorithmic}[1]
Initialize the RL agent with a random policy $\policy_\theta$, empty replay buffer $B$, number of steps per episode $\niter$, and a safe plan $\plan_0$
% Collect $N_{\coef}$ $<_k, a_k, r_k, x(k + 1)>$

\For{each episode}{
    \textbf{initialize} RL task.
    
    $\RLoutput(1) \leftarrow $ observe initial environment state. \label{ln:observe1}
    
    \For{$k = 1:\niter$}{
        
    %     $\plan_k \leftarrow \dreamfunc{\RLstate_k,\policy_\theta,\mimic_\phi}$ // use Alg. \ref{alg:dream} \label{ln:init_plan_by_dreaming}
    \tbold{ $\action(k) \leftarrow$ sample RL policy} \label{ln:sample_rl}
     
         $\reachapprox_k \leftarrow \zono{\vcoutput(k),\zeros}$ // init. reachable set \label{ln:initZono}
        
     %   $\plan_k,$ \textit{is\_safe} $\leftarrow \enforcefunc{\reachapprox_k,\action_k}$ //Alg.\ref{alg:enforce_safety} \label{ln:enforce_safety}
         $\plan_k$  $\leftarrow \enforcefunc{\reachapprox_k,\action(k),\vcoutput(k)}$ // Alg.~\ref{alg:enforce_safety} \label{ln:enforce_safety}
         
         %$\left(\reachapprox_j\right)_{j=k}^{k+\nplan} \leftarrow \reachfunc{\reachapprox_k,\plan_k}$ // use Alg. \ref{alg:LipReachability}  \label{ln:reachAlg1}
        %
       % \If{\textbf{not} \textit{is\_safe} \label{ln:if_unsafe}}{
       \If{$\plan_k == \emptyset$ \label{ln:if_unsafe}}{
           // if there is no found safe plan
            %\textbf{try} $\plan_k \leftarrow \adjustfunc{\plan_k,\statespace\obs}$ // use Alg. \ref{alg:adjust}
            
            %\textbf{catch} 
            execute failsafe maneuver; continue\label{ln:catch_unsafe} 
            %``unsafe'' $\plan_k \leftarrow \plan_{k-1}$ // keep old plan\label{ln:catch_unsafe}
        }
        
        $\action(k) \leftarrow$
        get first (safe) action from $\plan_k$ \label{ln:get_safe_action}

         $r_k \leftarrow \rewfunc(\RLoutput(k),\action(k))$ // get reward \label{ln:getReward}
        
         $\RLoutput(k + 1) \leftarrow $ observe new state \label{ln:observe2}
        
         \textbf{add} $(\RLoutput(k), \action(k), r_k, \RLoutput(k + 1))$ to $B$ \label{ln:addToBuffer}
        
         \textbf{train} the RL agent $\policy_\theta$ using a batch from $B$ \label{ln:train_rl_agent}
    }
}
% \end{algorithmic}
\end{algorithm}

\section{Safe Reinforcement Learning}\label{sec:solu}

We propose a new framework to enforce safety constraints on the RL agents without previous knowledge of the environment or the robot model apriori. The idea is to introduce a safety layer that filters out any unsafe action choice taken by the RL agent in a totally data-driven way while being as little invasive as possible for the agent's choice of action. Since the framework uses data-driven methods to enforce safety, the proposed method can handle time-varying systems by updating the data matrices to accommodate changes to the environment or the system model. 

% \subsection{Algorithm Overview}
%textbf{Overview.}
The proposed safety filter is summarized in Algorithm \ref{alg:rlwithddps}.
It uses a receding-horizon strategy to create a new safe plan $\plan_k$ in each $k$ receding-horizon motion planning iteration.
Consider a single planning iteration $k$ (Lines \ref{ln:observe1}--\ref{ln:train_rl_agent}).
Suppose the robot has previously created a safe plan $\plan_{k-1}$ (such as staying stopped indefinitely).
At the beginning of each iteration, our framework samples the RL policy for a new action $\RLaction(k)$ (Line \ref{ln:sample_rl}). Next, it enforces safety on the action chosen by the RL agent by solving a data-driven optimal control problem as in Algorithm \ref{alg:enforce_safety}. If Algorithm \ref{alg:enforce_safety} fails to find a safe plan, it executes a failsafe maneuver instead (Lines \ref{ln:if_unsafe}--\ref{ln:catch_unsafe}). Then, we apply the first action of the safe plan to the environment, get a reward, observe the environment, and train the RL agent from the replay buffer (Lines \ref{ln:get_safe_action}--\ref{ln:train_rl_agent}). We note that the proposed algorithm can be used during both training and deployment. That is, the safety layer can operate even for an untrained policy while initializing $\policy_\theta$ with random weights.

%\tbold{To proceed, we detail our methods for data-driven reachability and adjusting unsafe actions.}

%\subsection{Data-Driven Predictive Control} \label{subsec:data_driven_reachability_analysis}
 
%\input{Sections/algorithms/alg_reach} 
 
Data-driven predictive control solves an optimization problem to find a  reachable set (represented by a zonotope) under some constraints \cite{datadriven_predictive_control}. It utilizes data-driven reachability analysis \cite{alanwar2020data_conf,alanwar2020data} under the hood to overapproximate the reachable sets of an unknown system from noisy data collected offline. In this work, we assume a linear system model.

%Our reachability analysis uses noisy trajectory data of the black-box system model collected offline; we use data collected online only for training the policy and environment model.
We consider $\ntraj$ input-state trajectories of lengths $T_i \in \N$, $i = 1,\cdots,\ntraj$, with total duration $T\total = \sum_i^\ntraj T_i$.
% We denote the data as $(\vcstate\idxi_k)_{k=0}^{t_i}$, $(\action\idxi_k)_{k=0}^{t_i - 1}$, $i=1, \cdots, \ntraj$, which we collect in matrices:
% \begin{subequations}\label{eq:data}
% \begin{align}
%     \statedata_- &= \left[\vcstate\idx{1}_0,\cdots,\vcstate\idx{1}_{t_1-1}, \vcstate\idx{2}_0, \cdots, \vcstate\idx{\ntraj}_0, \cdots,\vcstate\idx{\ntraj}_{t_\ntraj-1} \right],\\
%     \statedata_+ &= \left[\vcstate\idx{1}_1,\cdots,\vcstate\idx{1}_{t_1}, \vcstate\idx{2}_1, \cdots, \vcstate\idx{\ntraj}_1, \cdots,\vcstate\idx{\ntraj}_{t_\ntraj} \right],\\
%     \actiondata &= \left[\action\idx{1}_0,\cdots,\action\idx{1}_{t_1-1}, \action\idx{2}_0,\cdots,\action\idx{\ntraj}_0,\cdots,\action\idx{\ntraj}_{t_\ntraj-1} \right].\label{eq:input_data}
% \end{align}
% \end{subequations}
% We collect all the data in a tuple $\data = (\statedata_-,\statedata_+,\actiondata)$.
We denote the data as $(y\idxi_k)_{k=0}^{t_i}$, $(\action\idxi_k)_{k=0}^{t_i-1}$, $i=1, \cdots, \ntraj$.
To further ease the, we rearrange the data in the following matrices:
\begin{subequations}\label{eq:data}
\begin{align}
    \statedata_- &= \left[y\idx{1}_0,\cdots,y\idx{1}_{t_1-1}, \cdots, y\idx{\ntraj}_0, \cdots,y\idx{\ntraj}_{t_\ntraj-1} \right],\\
    \statedata_+ &= \left[y\idx{1}_1,\cdots,y\idx{1}_{t_1}, \cdots, y\idx{\ntraj}_1, \cdots,y\idx{\ntraj}_{t_\ntraj} \right],\\
    \actiondata &= \left[\action\idx{1}_0,\cdots,\action\idx{1}_{t_1-1}, \cdots,\action\idx{\ntraj}_0,\cdots,\action\idx{\ntraj}_{t_\ntraj-1} \right],\label{eq:input_data}
\end{align}
\end{subequations}
and similarly for the unknown noise realizations $V_-,V_+$ and $W_-$. %Applying action $(\actiondata)_{:\,,i}$ on a output $(\statedata_-)_{:\,,i}$ results in a new output $(\statedata_+)_{:\,,i}$ according to the collected data offline. 
It follows directly from Assumptions \ref{assumption:noise_zonotope} and \ref{assumption:meas_noise_zonotope} that the noise realizations are within corresponding matrix noise zonotopes $V_-,V_+ \in \mathcal{M}_v$, $W_- \in \mathcal{M}_w$ and $AV_- \in \mathcal{M}_{Av}$ as shown in \cite{alanwar2020data}.

Next, we compute a set of models consistent with the collected data which is utilized to compute the data-driven reachable sets in the following lemmas. 
% We use a tuple $\data = (\statedata,\actiondata)$ to collect all the data for notational convenience.
% Recall that the noise zonotope $\noisespace$ has $\nnoisegen$ generators.
% From Assumption \ref{assumption:noise_zonotope}, one can show that $\noisedata$ is drawn from a matrix zonotope: $\noisedata \in M_\noise$, where
% \begin{align}\label{eq:noise_mat_zono}
%     M_\noise = \matzono{\Ctr_\noise,(\Gen_\noise\idxi)_{i=1}^{\nnoisegen\cdot t\total}}    
% \end{align}
% is created by concatenating multiple copies of $\noisespace$ so that each $\Gen_\noise\idxi = \Gen_\noise$ as explained in  \cite{alanwar2020data_conf}.
%Selecting enough data to sufficiently capture system behavior is a challenge that depends on the system, though specific sampling strategies exist for some systems \cite{conf:modelBasedReachRL}.
\begin{lemma}[\cite{datadriven_predictive_control}]
\label{th:setofAB}
Given input-output trajectories $D = \{U_-, Y\}$ of the system \eqref{eq:sys}, then 
\begin{align}
    \mathcal{M}_\Sigma = (Y_{+} - \mathcal{M}_w - \mathcal{M}_v + \mathcal{M}_{Av})\begin{bmatrix} 
    Y_- \\ U_- 
    \end{bmatrix}^\dagger
   \label{eq:zonoAB}
\end{align} 
 contains all matrices $\begin{bmatrix}A & B \end{bmatrix}$ that are consistent with the data and noise bounds. 
\end{lemma}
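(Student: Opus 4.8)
The plan is to reconstruct the exact linear relation that every data sample must satisfy, and then to inflate the three unknown noise terms into their bounding matrix zonotopes. I would start from the two defining equations of system \eqref{eq:sys} together with the standing assumption $C = I$, which gives $y(t) = x(t) + v(t)$, i.e. $x(t) = y(t) - v(t)$. Substituting both $x(t)$ and $x(t+1)$ into the state update $x(t+1) = Ax(t) + Bu(t) + w(t)$ eliminates the latent state and yields, for each sample,
\[
    y(t+1) = A\,y(t) + B\,u(t) + w(t) + v(t+1) - A\,v(t).
\]
Stacking these identities column by column over all $\ntraj$ trajectories, in the same ordering used in \eqref{eq:data}, produces the single matrix identity $Y_+ = A Y_- + B U_- + W_- + V_+ - A V_-$, where $W_-, V_+, V_-$ collect the (unknown) noise realizations.

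Next I would isolate the unknown model. Rearranging the stacked identity gives
\[
    \begin{bmatrix} A & B \end{bmatrix}\begin{bmatrix} Y_- \\ U_- \end{bmatrix} = Y_+ - W_- - V_+ + A V_-,
\]
so that, provided the stacked data matrix $\begin{bmatrix} Y_- \\ U_- \end{bmatrix}$ has full row rank (a persistency/excitation condition), its pseudoinverse acts as a right inverse and
\[
    \begin{bmatrix} A & B \end{bmatrix} = \big( Y_+ - W_- - V_+ + A V_- \big)\begin{bmatrix} Y_- \\ U_- \end{bmatrix}^\dagger .
\]
This identity holds for the true system matrices paired with the true noise realizations.

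The conceptually central step is then to pass from this single realization to the entire consistent set. By Assumptions \ref{assumption:noise_zonotope} and \ref{assumption:meas_noise_zonotope}, any admissible realization satisfies $W_- \in \mathcal{M}_w$, $V_+ \in \mathcal{M}_v$, and, crucially, $A V_- \in \mathcal{M}_{Av}$; this last containment is exactly what lets me bound the term $AV_-$ directly instead of treating it as a bilinear product of two unknowns $A$ and $V_-$. Replacing each unknown matrix by its containing matrix zonotope, and invoking closure of matrix zonotopes under Minkowski sum and under right multiplication by the constant matrix $\begin{bmatrix} Y_- \\ U_- \end{bmatrix}^\dagger$, turns the right-hand side into precisely $\mathcal{M}_\Sigma$ of \eqref{eq:zonoAB}. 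Since every $[A\;B]$ consistent with the data and noise bounds arises from some admissible choice of $W_-, V_+, A V_-$, each such pair lies in $\mathcal{M}_\Sigma$, which is the claim. I expect the main obstacle to be stating the rank condition precisely enough that the pseudoinverse is genuinely a right inverse, and being careful that substituting sets for realizations yields an over-approximation (a superset) rather than an equality — so $\mathcal{M}_\Sigma$ may also contain spurious models, which is harmless because the lemma asserts only containment.
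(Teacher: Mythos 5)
Your argument is correct and is essentially the standard proof of this result from the cited reference \cite{datadriven_predictive_control} (the paper itself imports the lemma without proof): eliminate the latent state via $C=I$ to get $Y_+ = AY_- + BU_- + W_- + V_+ - AV_-$, isolate $\begin{bmatrix}A & B\end{bmatrix}$ using the right-inverse property of the pseudoinverse under a full-row-rank (persistency of excitation) condition, and relax the unknown realizations $W_-$, $V_+$, $AV_-$ to their bounding matrix zonotopes. Your two caveats --- that the rank condition must hold for the pseudoinverse step, and that $\mathcal{M}_\Sigma$ is only an over-approximating superset --- are both accurate and match the treatment in the source.
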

% \begin{proof}
% The proof follows the proof of \cite[Thm.1]{alanwar2020data_conf}. From the system description in \eqref{eq:sys}, we have
% \begin{align} \label{eq:yp}
%     Y_+  = \begin{bmatrix}A  & B\end{bmatrix} \begin{bmatrix} Y_- \\ U_- \end{bmatrix} + {W}_-\,,
% \end{align}
% where ${W}_-$, ${V}_-$, ${V}_+$ and $A{V}_-$ are the noise in the collected data. While the noise in the collected data ${W}_-$, ${V}_+$, ${V}_-$ and $A{V}_-$ is unknown, we can use the respective bounds $\mathcal{M}_w$, $\mathcal{M}_v$ and $\mathcal{M}_{Av}$ to obtain \eqref{eq:zonoAB}, where $\begin{bmatrix}A  & B\end{bmatrix} \in \mathcal{M}_\Sigma$ given that ${W}_- \in \mathcal{M}_{w}$, ${V}_+, {V}_- \in \mathcal{M}_{v}$ and $A {V}_- \in \mathcal{M}_{Av}$, thereby proving \eqref{eq:zonoAB}.
% \end{proof}

\begin{lemma}[\cite{datadriven_predictive_control}]
\label{th:reach_lin}
Given input-output trajectories $D =\{U_-, Y\}$ of the system in \eqref{eq:sys}, then 
\begin{align}
\hat{\mathcal{R}}_{t+1} &= \mathcal{M}_{\Sigma} (\hat{\mathcal{R}}_{t} \times \mathcal{Z}_{u,t}  ) +  \mathcal{Z}_w +  \mathcal{Z}_v -   \mathcal{Z}_{Av},\label{eq:Rkp1}
\end{align}
contains the reachable set, i.e., $\hat{\mathcal{R}}_{t+1} \supseteq \mathcal{R}_{t+1}$ where $\hat{\mathcal{R}}_{0}=\zono{y(0),0}$, and $\mathcal{Z}_{u,t}=\zono{u(t),0}$. 
\end{lemma}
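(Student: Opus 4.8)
The plan is to argue by induction on the time index $t$ that $\hat{\mathcal{R}}_t \supseteq \mathcal{R}_t$, so that the one-step recursion \eqref{eq:Rkp1} propagates the containment forward. The base case is immediate from the initialization $\hat{\mathcal{R}}_0 = \zono{y(0),0} = \{y(0)\}$: in the online scheme each planning iteration is seeded with the currently measured output, so the initial reachable set is exactly this singleton. All of the substance of the argument therefore lies in the inductive step.

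For the inductive step I would fix an arbitrary reachable output $y(t+1) \in \mathcal{R}_{t+1}$ and unpack the definition \eqref{eq:reachable_set_k}: there exist a true system realization, an admissible input $u(t) \in \mathcal{Z}_{u,t}$, a process noise $w(t) \in \noisespace$, and measurement noises $v(t), v(t+1) \in \measnoisespace$ with $x(t+1) = A x(t) + B u(t) + w(t)$, $y(t+1) = x(t+1) + v(t+1)$, and $y(t) = x(t) + v(t)$ where $y(t) \in \mathcal{R}_t$. The key manipulation is to eliminate the unmeasured true state through $x(t) = y(t) - v(t)$, which yields
\[
y(t+1) = \begin{bmatrix} A & B \end{bmatrix}\begin{bmatrix} y(t) \\ u(t) \end{bmatrix} - A v(t) + w(t) + v(t+1).
\]

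I would then match each term against the corresponding set in \eqref{eq:Rkp1}. By Lemma \ref{th:setofAB} the true concatenated matrix $\begin{bmatrix} A & B \end{bmatrix}$ lies in $\mathcal{M}_\Sigma$; by the induction hypothesis $y(t) \in \hat{\mathcal{R}}_t$ and $u(t) \in \mathcal{Z}_{u,t}$, so the stacked vector lies in $\hat{\mathcal{R}}_t \times \mathcal{Z}_{u,t}$ via \eqref{eq:cart}. Hence the product term belongs to the matrix-zonotope image $\mathcal{M}_\Sigma(\hat{\mathcal{R}}_t \times \mathcal{Z}_{u,t})$, using that this zonotope over-approximates $\{M\zeta : M \in \mathcal{M}_\Sigma,\ \zeta \in \hat{\mathcal{R}}_t \times \mathcal{Z}_{u,t}\}$. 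The residual terms are absorbed by the noise bounds: $w(t) \in \mathcal{Z}_w$ and $v(t+1) \in \mathcal{Z}_v$ directly, while $-A v(t) \in -\mathcal{Z}_{Av}$ follows from Assumption \ref{assumption:meas_noise_zonotope}. Combining these with the Minkowski sum \eqref{eq:minkowski} places $y(t+1)$ in $\hat{\mathcal{R}}_{t+1}$, closing the induction.

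The step I expect to require the most care is the measurement-noise bookkeeping. Since the true state $x(t)$ is never observed, the substitution $x(t) = y(t) - v(t)$ is forced, and it is exactly this substitution that injects the extra term $-A v(t)$; the subtracted set $-\mathcal{Z}_{Av}$ in \eqref{eq:Rkp1} exists precisely to absorb it, so the proof must invoke the second part of Assumption \ref{assumption:meas_noise_zonotope} rather than $\measnoisespace$ alone. A secondary point worth stating explicitly is that the matrix-zonotope/zonotope product is only an over-approximation (containment, not equality), which is exactly the direction needed and is consistent with the claimed inclusion $\hat{\mathcal{R}}_{t+1} \supseteq \mathcal{R}_{t+1}$.
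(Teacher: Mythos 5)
Your proof is correct and is essentially the argument given in the cited reference \cite{datadriven_predictive_control} (and \cite{alanwar2020data_conf,alanwar2020data}), which this paper invokes rather than reproducing: induction on $t$, elimination of the unmeasured state via $x(t)=y(t)-v(t)$, containment of the true $\begin{bmatrix}A & B\end{bmatrix}$ in $\mathcal{M}_\Sigma$ from Lemma~\ref{th:setofAB}, and absorption of the residual $-Av(t)$, $w(t)$, $v(t+1)$ terms into $-\mathcal{Z}_{Av}$, $\mathcal{Z}_w$, $\mathcal{Z}_v$. You also correctly identify the two points that need care --- the role of the second part of Assumption~\ref{assumption:meas_noise_zonotope} and the fact that the matrix-zonotope/zonotope product is an over-approximation --- so no gaps remain.
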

To enforce safety, we must generate a plan instead of a single action. The reason behind this is that safety can't be enforced for a single action as the actuation of the system is finite (e.g., a robot needs N time steps to brake). Algorithm~\ref{alg:enforce_safety} enforces safety while being as little invasive as possible to the RL agents' choice of action. This is done by solving a data-driven optimal control problem as in Line \ref{ln:opt_problem} of Algorithm~\ref{alg:enforce_safety}. This optimization problem is solved under some constraints (avoiding obstacles). First, we apply 
data-driven reachability on the state zonotope $\hat{\mathcal{R}}_{t+k+1|t}$ in \eqref{eq:Rconst}. The computed reachable set $\hat{\mathcal{R}}_{t+k+1|t}$, which includes the output in line \eqref{eq:y_in_r}, is ensured to satisfy the output constraints $\mathcal{Y}_{t+k+1}$ in order to guarantee the safety in \eqref{eq:const_y}. The initial state in the optimization problem is set to be the same as the robot state in \eqref{eq:y0const}.

\begin{remark}
The constraints \eqref{eq:const_y} is implemented by converting both zonotopes into intervals and ensuring that 
\begin{align*}
   &\hat{\mathcal{R}}_{u,t+k+1|t}  \leq \mathcal{Y}_{u,t+k+1}, \quad \hat{\mathcal{R}}_{l,t+k+1|t}  \geq \mathcal{Y}_{l,t+k+1} 
\end{align*}
where $\hat{\mathcal{R}}_{u,t+k+1|t}$ and $\hat{\mathcal{R}}_{l,t+k+1|t}$ are the upper and lower bounds of $\hat{\mathcal{R}}_{t+k+1|t}$, and $\mathcal{Y}_{u,t+k+1}$ and $\mathcal{Y}_{l,t+k+1}$ are the upper and lower bounds of $\mathcal{Y}_{t+k+1}$. 
\end{remark}

\begin{algorithm}[t]
\caption{Enforce Safety Using Data Driven Predictive Control}
\label{alg:enforce_safety}

\KwInput{action $\RLaction(k)$, state $\vcoutput(k)$, obstacles $\statespace\obs$, initial reachable set $\reachapprox_k$, time limit $t\lbl{max}$, planning horizon $n_{plan}$, and process noise zonotope $\mathcal{Z}_w$} %// note $\plan_k$ has failsafe $\action\brk$ for all $j > k+\nplan$
\KwOutput{Safe plan $\plan_k$, if available}
Set the reference input action $r \leftarrow \RLaction(k)$, and the obstacles free area to $\mathcal{Y}_{i+k+1}$

Solve the optimization problem (for $i=0, \dots, n_{plan}$) where the input action zonotope $\mathcal{Z}_{u,k} {\leftarrow} \zono{u_{i+k|k}, 0}$ \label{ln:opt_problem}
\begin{subequations}
\label{eq:optzonopc}
\begin{alignat}{2}
&\!\min_{u,y}        &\quad&\!\!\! \norm{u_{k|k} - r}^2 \\
&\text{s.t.} &      &\!\!\! \hat{\mathcal{R}}_{i+k+1|k} {=} \mathcal{M}_{\Sigma} (\hat{\mathcal{R}}_{i+k|k} {\times} \mathcal{Z}_{u,k}  ) {+}  \mathcal{Z}_w{+}  \mathcal{Z}_v{-} \mathcal{Z}_{Av},  \label{eq:Rconst}\\%\mathcal{M}_{\Sigma} (\hat{\mathcal{R}}_{t+k|t} \times \mathcal{Z}_{u,t+k} ) +\mathcal{Z}_{w}+\mathcal{Z}_{v}\nonumber\\
%&&& \qquad \quad \qquad -\mathcal{Z}_{Av}, \label{eq:Rconst}\\
%&                  &      & y_{t+1}  \in \hat{\mathcal{R}}_{t+1},\label{eq:yconst}\\
&                  &      &\!\!\!  u_{i+k|k} \in \mathcal{U}_{i+k}, \label{eq:uconst}\\
&                  &      & \!\!\!\hat{\mathcal{R}}_{i+k+1|k} \subseteq \mathcal{Y}_{i+k+1}, \label{eq:const_y}\\
&                  &      & \!\!\!y_{i+k+1|k}\in \hat{\mathcal{R}}_{i+k+1|k}, \label{eq:y_in_r}\\
&                  &      & \!\!\!y_{k|k} = y(k) \label{eq:y0const}
\end{alignat}
\end{subequations}

\eIf{all $\reachapprox_j \cap \statespace\obs = \emptyset$ \label{ln:adjust:check_if_safe}}{
    % $p \leftarrow \sum_{k=0}^n \norm{\plan\idx{k} - \plan\safe\idx{k}}_2$ // compute penalty
    
    \textbf{return} $\plan_k = (\action_j)_{j=k}^{\nplan}$ // found a safe plan% and $p$
}{
\textbf{return} $\plan_k = \emptyset$  // No safe plan was found \label{ln:adjust:return_unsafe}
%    \textbf{return} $\plan\unsafe$, ``unsafe'' // No safe plan was found \label{ln:adjust:return_unsafe}
}
\end{algorithm}

% \subsection{Analyzing Safety}
%\textbf{Analyzing Safety.}
We conclude this section by formalizing the safety guarantees of our proposed algorithm.

\begin{theorem}\label{thm:BRSL_is_safe}
Suppose the assumptions on the robot and environment from Section \ref{sec:pb} all hold, and, at time $k = 0$, the robot is \tbold{at safe state}.
Suppose also that, at each time $k > 0$, the robot \tbold{rolls out} a new $\plan_k$, and adjusts the plan using Algorithm \ref{alg:enforce_safety}.
Then, the robot is guaranteed to be safe at all times $k \geq 0$.
\end{theorem}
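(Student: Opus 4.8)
The plan is to argue by induction on the planning iteration $k$, maintaining throughout the invariant that the robot is always committed to a plan whose tail is a failsafe maneuver (e.g.\ braking into an invariant safe set, as discussed after Assumption~\ref{assumption:meas_noise_zonotope}). Establishing and preserving this invariant is exactly the recursive-feasibility argument standard in receding-horizon safety proofs: we never commit to a plan unless a verified escape to safety is baked into it. For the base case ($k=0$), the robot is at a safe state by hypothesis and Algorithm~\ref{alg:rlwithddps} is initialized with a safe plan $\plan_0$ terminating in a failsafe maneuver, so the invariant holds initially.

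For the inductive step, I would assume the robot is safe at iteration $k$ and holds a plan $\plan_k$ with a failsafe tail, and then examine iteration $k+1$, where Algorithm~\ref{alg:enforce_safety} attempts to solve the optimization~\eqref{eq:optzonopc}. I split into two cases. If the problem is feasible, the returned plan $\plan_{k+1}$ satisfies constraints~\eqref{eq:Rconst}--\eqref{eq:y0const}; in particular the obstacle check on Line~\ref{ln:adjust:check_if_safe} yields $\hat{\mathcal{R}}_j \cap \statespace\obs = \emptyset$ along the horizon. The key step is to promote this over-approximated guarantee into a statement about the true system: by Lemma~\ref{th:reach_lin} (whose validity rests on Assumptions~\ref{assumption:noise_zonotope}--\ref{assumption:meas_noise_zonotope} bounding the process and measurement noise) we have $\hat{\mathcal{R}}_{t+1}\supseteq \mathcal{R}_{t+1}$, so $\mathcal{R}_j \cap \statespace\obs = \emptyset$ as well and the true output $y$ avoids every obstacle. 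If instead the problem is infeasible, Algorithm~\ref{alg:enforce_safety} returns $\plan_{k+1}=\emptyset$ on Line~\ref{ln:adjust:return_unsafe}, triggering the failsafe branch (Lines~\ref{ln:if_unsafe}--\ref{ln:catch_unsafe}); by the inductive hypothesis the failsafe tail of $\plan_k$ is still engageable and keeps the robot inside the safe invariant set. In both cases the robot is safe at $k+1$ and again holds a plan ending in a failsafe maneuver, so the invariant is preserved and the induction closes for all $k\ge 0$.

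The hard part, and where I would spend the most care, is making the recursive-feasibility claim airtight rather than simply assuming it. Concretely, I need to argue that (i) the planning horizon $\nplan$ is long enough that every feasible solution of~\eqref{eq:optzonopc} contains the \emph{complete} braking/loiter maneuver, so the committed plan genuinely terminates in an invariant safe set rather than merely being collision-free over a truncated window; and (ii) the receding-horizon timing is such that the previous failsafe remains executable throughout the interval during which the new optimization is being solved, so the fallback on Line~\ref{ln:catch_unsafe} is always available. The over-approximation in Lemma~\ref{th:reach_lin} cleanly bridges the unknown-model and bounded-noise gap, but it is this ``always-have-an-out'' structure, rather than the reachability bound itself, that ultimately carries the safety guarantee: a failure to enforce the failsafe tail would break the induction even with perfectly computed reachable sets.
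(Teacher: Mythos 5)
Your proposal is correct and follows essentially the same route as the paper's own proof: induction on the planning iteration, with the reachable-set over-approximation (Lemma~\ref{th:reach_lin}, resting on Assumptions~\ref{assumption:noise_zonotope} and \ref{assumption:meas_noise_zonotope}) handling the feasible case and the failsafe braking maneuver handling infeasibility. The recursive-feasibility concerns you flag as ``the hard part'' --- that the horizon must contain the complete braking maneuver and that the previous failsafe must remain engageable --- are in fact only asserted, not proven, in the paper's proof as well (it simply states that the plan length exceeds the number of braking steps), so your treatment is if anything slightly more candid about where the argument leans on unstated structure.
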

\begin{proof}
The proof can be done by induction. At the time $0$, the agent can apply a braking action $\action\brk$ to stay safe for all time.
Assume a safe plan exists at time $k \in \N$. Then, at any timestep $k + 1$, in case of failure of Algorithm \ref{alg:enforce_safety} to find a safe plan, and since the plan is always larger than the number of steps required by the robot to brake to a stop fully, the robot always has a failsafe braking maneuver that enables it to stop indefinitely; otherwise, if a new plan is found, the plan is safe for two reasons.
First, the reachability algorithm is guaranteed to contain the true reachable set of the system \cite[Theorem 2]{alanwar2020data_conf} because the noise zonotopes bound noise as in Assumptions \ref{assumption:noise_zonotope} and \ref{assumption:meas_noise_zonotope}.
Second, when adjusting an unsafe action with Algorithm \ref{alg:enforce_safety}, the algorithm will only return a safe plan in which the first action is as close as possible to the RL action of choice while enforcing the hard constraints of safety. We emphasize that in order for a plan to be deemed safe, the plan length must be larger than the number of steps needed for the robot to brake to a full stop.
\end{proof}
%\vspace{-3mm}
\section{Experiments} \label{sec:eval}
We evaluate the proposed frameworks on two motion planning tasks in simulation: A Turtlebot in Gazebo and a Quadrotor in Unreal Engine 4 (UE4). Fig. \ref{fig:sim_envs} shows our simulation environments. The simulation shows that the proposed framework shields the RL agents from unsafe actions while maximizing the objective. Next, we describe our simulation environments in detail.

\subsection{Simulation Environments Setup}
\subsubsection{Turtlebot Setup}
We use Turtlebot 3 in the Gazebo simulator. The robot has a longitudinal velocity in $[0, 0.25]$ m/s, and an angular velocity in $[-0.5, 0.5]$ rad/s. We clip the upper longitudinal velocity at $0.25$ m/s to be consistent with the real robot dynamics. The robot is also provided with wheel encoders, an IMU to estimate the robot's speed and position, and a planar lidar with 18 laser measurements is evenly spanning $180^{\circ}$. %The robot should use these noisy sensor measurements to achieve its task. 
In each episode, the robot is spawned randomly in a safe spot. It's required to reach the goal on the map denoted by the green circle in Fig. \ref{fig:tb_env}. The goal is chosen at a random location in each episode, and the obstacles are spawned randomly in each episode as well. 

\begin{figure}[t]
\centering     %%% not \center
\begin{subfigure}[h][Turtlebot3 Environment]{\label{fig:tb_env}\includegraphics[width=4cm, height=4cm]{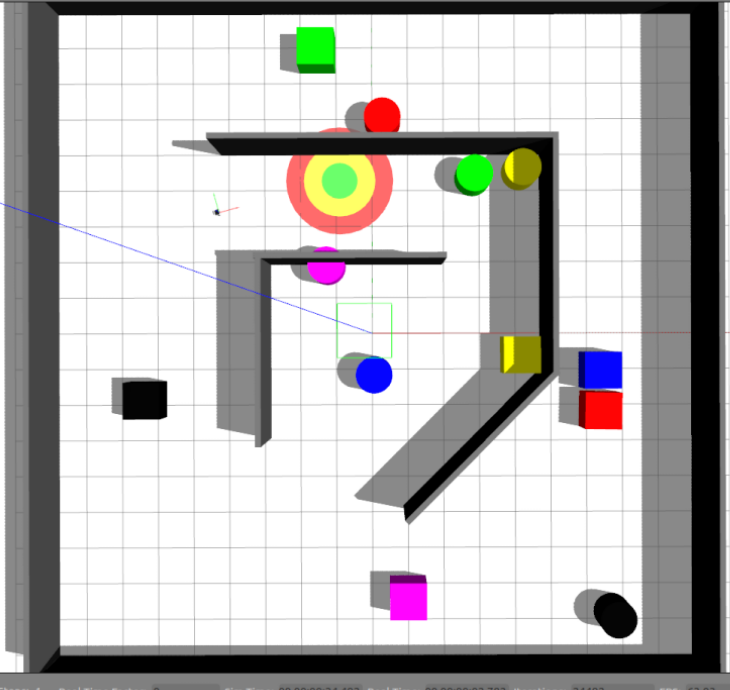}}
\end{subfigure}
\begin{subfigure}[h][Quadrotor Environment]{\label{fig:quad_env}\includegraphics[width=4cm, height=4cm]{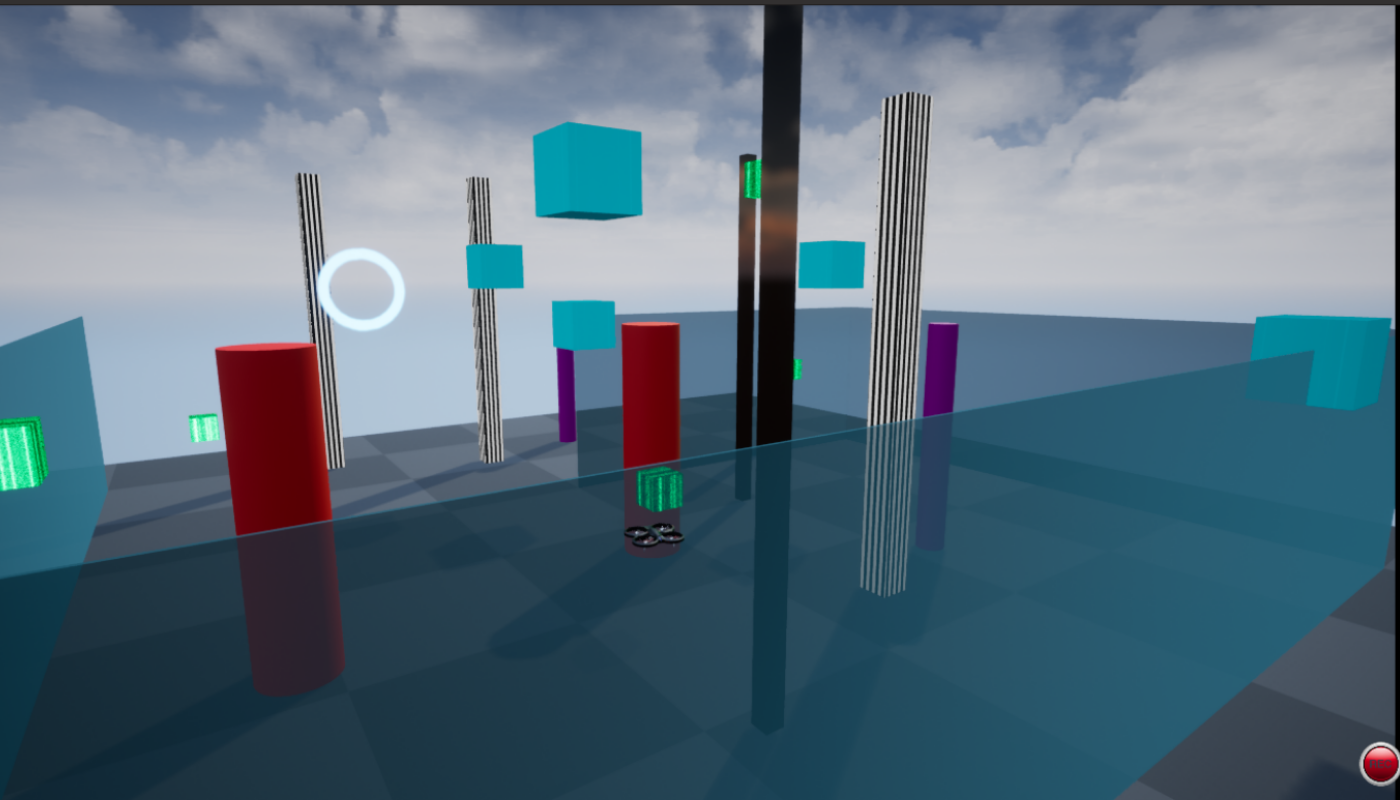}}
\end{subfigure}
\caption{Evaluation Environments.}
\label{fig:sim_envs}
\vspace{-4mm}
\end{figure}

\subsubsection{Quadrotor}
The quadrotor is simulated in Unreal Engine 4 (UE4). They have a similar dynamics with a velocity ranging from $[-5, -5, -5]$ m/s to $[5, 5, 5]$ m/s. Both are an IMU to estimate the rotors' position and velocity and are provided with a multi-channel 3D lidar. The lidar spans a range of $360^{\circ}$ horizontally and a range of $60^{\circ}$ vertically. Again, the environments' obstacles and goals are spawned randomly at the beginning of each episode. The goal of the quadrotor is to reach a goal position while avoiding the obstacles. 

\begin{figure*}
\centering     %%% not \center
\begin{subfigure}[h][Turtlebot3 Reward]{\label{fig:turtlebot_reward}\includegraphics[width=0.35\textwidth]{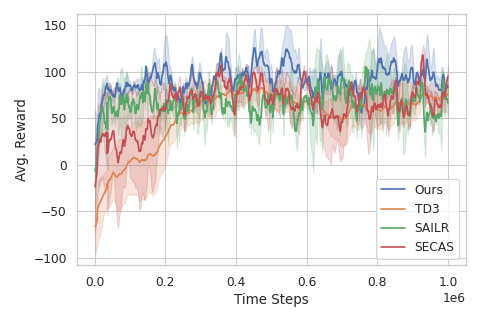}}
\end{subfigure}
\begin{subfigure}[h][Quadrotor Reward]{\label{fig:drone_reward}\includegraphics[width=0.35\textwidth]{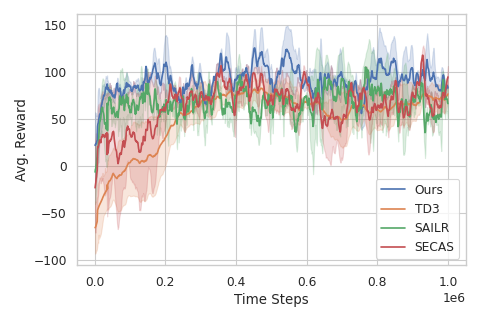}}
\end{subfigure}

\caption{Average reward over time of Our method (blue), SAILR \cite{wagener2021safe}, SECAS \cite{dalal2018safe}, and a vanilla TD3 baseline for each of our experiments.}
\label{fig:sim_results}
\vspace{-2mm}
\end{figure*}

\subsection{RL Agent Setup}

\subsubsection{Turtlebot Environment}
We use TD3 \cite{conf:TD3} as our RL agent. The state vector $\RLoutput \in \mathbb{R}^{26}$ consists of the state of the physical robot $\in \mathbb{R}^{8}$ (position and orientation) estimated from the noisy wheel encoders and IMU, and the lidar measurements $\in \mathbb{R}^{18}$.

\subsubsection{Quadrotor Setup}
Similar to the Turtlebot example, we use TD3 as our RL agent. The state vector $\RLoutput \in \mathbb{R}^{189}$ consists of the state of the physical robot $\in \mathbb{R}^{9}$ (position only) estimated from the noisy IMU, and the lidar measurements $\in \mathbb{R}^{180}$. 

\subsection{Results and Discussion}
The results are summarized in Table \ref{tbl:comparison}, and Fig. \ref{fig:sim_results}. We see that the proposed method could achieve the highest reward and outperform all the other methods. As for the safety constraints, we see from Table \ref{tbl:comparison} that both the proposed method and SECAS could impose hard safety constraints (i.e., having $0$ collision rate). However, both SAILR and TD3 agents couldn't have full constraint satisfaction. Regarding the goal rate and the robot speed, our proposed method outperforms all the other methods without being overly conservative. Finally, regarding computation time, we see that the vanilla TD3 RL agent outperformed all the other methods due to the absence of any safety modules in the framework. However, we note that the proposed method is real-time even when dealing with large data matrices since the calculations can be done efficiently on GPUs. %, the proposed method allows for large sizes of data matrices while still being able to operate in real time.

%Although the proposed method is shown to work well in practice, we note that if the dynamical system is highly non-linear or near it's dynamical limits (e.g: self-driving car turning at very high speeds), this can directly affect the linear approximation assumed in this method.

\begin{table*}[ht]
\caption{Numerical experiment results (best values in bold)}
\label{tbl:comparison}
\vspace{-3mm}
\centering
\resizebox{1.98\columnwidth}{!}{\begin{tabular}{ c|c|c|c|c||c|c|c|c|} 
 %\hline
 & \multicolumn{4}{c||}{Turtlebot} & \multicolumn{4}{c}{Quadrotor} \\
 & ours & SAILR & \tbold{SECAS} & Baseline  & ours & SAILR & \tbold{SECAS}& Baseline  \\
 \hline
 Goal Rate [\%] &
    \textbf{55} & 50 & 53 & 38  &
    \textbf{68} & 60 & 61 & 46 \\
 Collision Rate [\%] &
    \textbf{0.0} & 4.9 & \tbold{0.0} & 17 
    &\textbf{0.0} & 5.6 & \tbold{0.0} & 22\\
 Mean/Max Speed [m/s] & 
    .06 / \textbf{0.25} & .04 / 0.17 & \tbold{.06} / 0.21 &\textbf{.08} / \textbf{.25} & 
    3.5 / \textbf{8.6} &  3.2 / 8.2 & 3.2 / 8.3 & \textbf{3.7} / \textbf{8.6}\\
 Mean Reward &
    \textbf{84} & 69 & 66 & 58 &
    \textbf{86} & 78 & 72 & 63 \\
 Mean $\pm$ Std. Dev.  &
    \multirow{2}{*}{80.41 $\pm$ 15.0} & \multirow{2}{*}{29.8 $\pm$ 11.8} & \tbold{\multirow{2}{*}{21.4 $\pm$ 11.8}} & \multirow{2}{*}{\textbf{8.0} $\pm$ 7.19} &
    \multirow{2}{*}{80.9 $\pm$ 17.27} & \multirow{2}{*}{42.22 $\pm$ 17.58} & \tbold{\multirow{2}{*}{32.7 $\pm$ 28.05}} & \multirow{2}{*}{\textbf{12.7} $\pm$ 7.5} \\
    Computation Time [ms] &&&&&&&& \\
 \hline
    \end{tabular}}
\vspace{-4mm}
\end{table*}
\section{Conclusion}\label{sec:con}

This paper proposes a framework for safe reinforcement learning. The proposed framework leverages data-driven methods to provide safety guarantees. The framework was evaluated on a path planning task for Turtlebot 3 and a quadrotor, where we proved mathematically and experimentally that the proposed method could indeed provide safety under reasonable assumptions. 
For future work, we will explore continuous-time settings, reducing the conservativeness of our reachability analysis, and minimizing the amount of data needed to guarantee safety.

\section*{Acknowledgement}

This paper has received funding from the European Union's Horizon 2020 research and innovation programme under grant agreement No. 830927.

\bibliographystyle{IEEEtran}
\bibliography{ref} 
%\bibliographystyle{ACM-Reference-Format}
% \bibliography{ref}
% \bibliographystyle{icml2021}
% \clearpage

% \appendix
% \input{Sections/A_implementation_details}

\end{document}